\documentclass{article}
\usepackage{arxiv}

\usepackage[utf8]{inputenc} 
\usepackage[T1]{fontenc}    
\usepackage{hyperref}       
\usepackage{url}            
\usepackage{booktabs}       
\usepackage{amsfonts}       
\usepackage{nicefrac}       
\usepackage{microtype}      
\usepackage{paralist,multirow,graphicx,amsmath,amsfonts,amssymb,epstopdf,color,epsfig,ragged2e,array,standalone,booktabs,subfigure}
\usepackage{stfloats}
\graphicspath{ {./images/} }

\usepackage{array}
\usepackage{tabularx}
\usepackage{amsmath}
\usepackage{amsthm}
\usepackage{amssymb}
\usepackage{amsfonts}
\usepackage{mathtools}
\usepackage{algorithm}
\usepackage{algorithmic}
\usepackage{enumitem}
\usepackage{centernot}
\usepackage[toc,page]{appendix}



\DeclarePairedDelimiter\floor{\lfloor}{\rfloor}
\DeclareMathOperator\supp{supp}

\newcommand{\reals}{\ensuremath{\mathbb{R}}}
\newcommand{\naturals}{\ensuremath{\mathbb{N}}}

\newcommand{\defeq}{\vcentcolon=}

\newcommand{\prob}{\ensuremath{\mathbb{P}}}


\newcommand{\va}{\mathbf a}

\newcommand{\vx}{\mathbf x}

\newcommand{\vv}{\mathbf v}

\newcommand{\mB}{\ensuremath{\mathbf{B}}}

\newcommand{\mA}{\ensuremath{\mathbf{A}}}

\newtheorem{theorem}{Theorem}[section]

\newtheorem{lemma}[theorem]{Lemma}
\newtheorem{definition}{Definition}[section]

\title{Analysing recovery of activation pathways in DCNNs via Deep Convolutional Sparse Coding}


\author{
 Michael Murray, Jared Tanner\\
  Mathematical Institute, University of Oxford, UK\\
  \& The Alan Turing Institute, London, UK\\
  \texttt{murray@maths.ox.ac.uk}\\
  }

\begin{document}
\maketitle
\begin{abstract}
In this paper we investigate the impact of ReLU and related sparsifying activation functions on signal propagation in the forward pass of Deep Convolutional Neural Networks (DCNNs) \footnote{This paper is a long print version of a paper accepted to the 2018 IEEE Data Science Workshop.}. To this end, we consider a variant of the approach proposed by Papyan et al \cite{JMLR:v18:16-505}, which interprets the forward pass of a DCNN as solving a sequence of sparse coding problems, we therefore refer to this approach as Deep Convolutional Sparse Coding (DCSC). In \cite{JMLR:v18:16-505} the authors proved that representations with an activation density proportional to the ambient dimension of the data are at least approximately recoverable. We extend these uniform guarantees and prove with high probability that representations with a far greater density of activations per layer are approximately recoverable.
\end{abstract}
\keywords{Convolutional Neural Networks, Convolutional Sparse Coding, Sparse Recovery.}
\section{Introduction}
Ever since the arrival of AlexNet \cite{NIPS2012_4824} in 2012, DCCNs have been the state of the art for many problems in computer vision. They have also achieved excellent results in a host of other applications, including Natural Language Processing \cite{Kim2014ConvolutionalNN} and Speech Recognition \cite{2017arXiv170102720Z}. In this paper we analyse signal propagation in the forward pass of Deep Convolutional Neural Networks (DCNNs), seeking to better understand the role of the activation function in computing representations with explanatory power. To this end we build on the work of Papyan et al \cite{JMLR:v18:16-505}, who, inspired by the connections between convolutional weight matrices used in deep learning and the dictionaries used in convolutional sparse coding \cite{2017ITSP...65.5687P}, as well as the fact that ReLU activation functions are sparsifying, interpreted the forward pass of a DCNN as solving a sequence of convolutional sparse coding problems. This interpretation enables the analysis of the sequence of representations generated at each layer of the network using tools and ideas from compressed sensing.

To this end, we introduce and study the Deep Convolutional Sparse Coding (DCSC) model, defined in Definition \ref{ch3_def_DCSC_data_model}. This model assumes that the data is generated, at least approximately, by a matrix product between a dictionary, referred to as the global dictionary, and a sparse latent representation. Of key importance is the factorised form of this global dictionary and the sparse intermediary representations generated at different levels or layers of this factorisation. This data encoder can therefore be viewed as a linear network, with the activation pathway of a data point being the set of neurons at each layer which fire, i.e., are nonzero, as the signal propagates from the latent space to the observed data space. The activation pathway of a data point therefore indicates the key features present in the data and hence has strong explanatory power. In the DCSC model the forward pass of a DCNN is interpreted as the decoder associated with this linear encoder. The learning of the weights of the decoder is omitted and the parameters of the encoder and decoder are shared: this allows us to more transparently analyse the role of the activation function in enabling the forward pass to recover the activation pathway of a data point.

Papyan et al \cite{JMLR:v18:16-505} conducted an analysis of a similar model, demonstrating its connection to DCNNs and proved conditions under which the forward pass is guaranteed to recover activation pathways. A technical innovation of their work highlights that one can measure the efficacy of a sparsifying activation function through a new, local measure of sparsity particular to the convolutional structure present, referred to as stripe-sparsity. Using this measure the authors proved that representations with an activation density proportional to the ambient dimension of the data are recoverable. However, the upper bounds derived in \cite{JMLR:v18:16-505} on the stripe-sparsity of a recoverable activation at a given layer depend on the inverse of the mutual coherence of the weight matrix at that layer, which is typically quite small. This limits the applicability of these results as only data points with a very sparse activation near the input of the encoder are recoverable by the forward pass decoder. In this paper we extend these uniform guarantees to the modified DCSC model and prove that activation pathways with a greater density of activations per layer are recoverable with high probability. To prove this result we leverage techniques based on one step thresholding developed by Schnass and Vandergheynst \cite{4351958}.
\section{Inference in deep learning as sequential sparse coding} \label{ch3_sec_problem_setup}
\subsection{The Deep Convolutional Sparse Coding (DCSC) model} \label{ch3_subsec_DCSC_model}
We now introduce and define the DCSC model,  a particular instance of a weight tied encoder-decoder neural network pair based on the DCP model \cite{JMLR:v18:16-505}.

\begin{definition}[DCSC data model] \label{ch3_def_DCSC_data_model}
	 In order to define the DCSC encoder we introduce the following variables.
	\begin{itemize}
		\item $\textbf{A}^{(l)} \in \reals^{n_{l-1}M \times n_{l}M}$ is a deterministic convolutional matrix (see \cite{JMLR:v18:16-505} and \cite{2017ITSP...65.5687P} for further details) which is circular, banded and created by shifting a local dictionary $\textbf{A}_{Local}^{(l)} \in \reals^{m_l \times n_l}$ across all spatial locations.  At each layer we interpret $n_l$ as the number of local filters and $m_l$ as the dimension of each local filter. We further specify that $n_0 \defeq 1$, $m_l \defeq n_{l-1}m_{l-1}$ and for $l \geq 2$ there is a stride $s_{l} = n_{l-1}$ between each spatially shifted $\textbf{A}^{(l)}_{local}$. The columns of $\textbf{A}^{(l)}$ are assumed to have unit $\ell_2$ norm and are denoted as $\textbf{A}^{(l)}  =[\textbf{a}_1^{(l)} \ \textbf{a}_2^{(l)} \ ... \ \textbf{a}_{n_l M}^{(l)}] $.
		\item $D^{(l)}$ is a random, square, binary, diagonal matrix of size $n_{l}M \times n_{l}M $ whose diagonal entries are independent and identically distributed, taking values in $\{-1, 1\}$ each with probability $0.5$.
	\end{itemize}
		
	\noindent The DCSC encoder is a function $E_{DCSC}: \reals^{n_L M} \rightarrow \reals^{M }$ parameterized by the forward pass of a linear neural network. The input to the encoder is denoted $\vx^{(L)} \in \reals^{n_lM}$ and the representations of $\vx^{(L)}$ generated at each layer of the DCSC encoder are given by the recurrence relation
	\[
	\begin{aligned}
	\vx^{(l-1)} &\defeq \mA^{(l)} D^{(l)} \vx^{(l)} \;\; \forall \;\; l \in[L].
	\end{aligned}
	\]	
	Given a DCSC encoder, the corresponding weight tied decoder function $D_{DCSC}: \reals^{M } \rightarrow \reals^{n_LM}$ is parameterized by the forward pass of another neural network equipped with a nonlinear, sparsifying activation function. The input to the decoder is defined as
	\[
	\hat{\vx}^{(0)} \defeq E_{DCSC}(\vx^{(L)}) + \vv^{(0)} = \vx^{(0)} + \vv^{(0)},
	\]
	where $\vv^{(0)} \in \reals^{M}$ denotes the model noise. The representations of $\hat{\vx}^{(0)}$  generated at each layer of the decoder are defined recursively for $l \in [L]$ as
	\begin{equation}
	\hat{\vx}^{(l)} \defeq Proj_{|\supp(\cdot)|=k}\left( (\textbf{A}^{(l)}D^{(l)})^T \hat{\vx}^{(l-1)} \right),
	\end{equation} \label{eq:ch3_projection}
	where the projection operator $Proj_{|\supp(\cdot)|=k}(\cdot)$ keeps the $k$ largest elements in terms of absolute value unchanged and sets all other elements to zero. We further define the representation error between the encoder and decoder at the $l$th layer as
	\[
	\vv^{(l)} \defeq \vx^{(l)} - \hat{\vx}^{(l)}.
	\]
\end{definition}

In regard to Definition \ref{ch3_def_DCSC_data_model} a few remarks are in order. First, by replacing $D^{(l)}$ with the identity matrix at each layer $l \in [L]$, then the $DCP_{\lambda}$ and $DCP_{\lambda}^{\epsilon}$ models presented in \cite{JMLR:v18:16-505} are recovered. Second, by substituting $Proj_{|\supp(\cdot)|=k}(\cdot)$ with a ReLU operator we obtain the standard forward pass algorithm across a ReLU layer of a neural network with $(\textbf{A}^{(l)}\textbf{D}^{(l)})^T$ the weight matrix between the $l-1$th and $l$th layers. One can interpret $Proj_{|\supp(\cdot)|=k}(\cdot)$ as model for a family of sparsifying operators, of which ReLU and the soft and hard thresholding operators are examples. Indeed, in almost any practical circumstance, by adjusting the bias at each layer appropriately it should be clear how each specific sparsifying activation function can implement a projection onto the $k$ largest entries of the argument vector in question.

There are numerous questions one might ask concerning the DCSC model, for instance, what conditions are sufficient and or necessary for recovery in the noiseless case, or to ensure$\|\vx^{(l)} - \hat{\vx}^{(l)} \|_2 \leq \epsilon$ for all $l\in [L]$ and some $\epsilon \in \reals_{>0}$. These are some of the questions studied in \cite{JMLR:v18:16-505} in the case where $D^{(l)}$ is the identity matrix. Also covered in this work is a study on the recovery of the support at each layer, it is this notion of recovery that we will focus on.

\begin{definition}[Activation pathway]
	The activation pathway associated with $\vx^{(L)} \in \reals^{n_L M}$ is the sequence of supports $\{\supp(\vx^{(l)})\}_{l=1}^L$. We say that the activation pathway of $\vx^{(L)} $ is recovered up to layer $l$ iff $\supp(\hat{\vx}^{(k)}) = \supp(\vx^{(k)})$ for all $k \in [l]$. An activation pathway is recovered iff it is recovered at all layers, i.e, up to layer $L$.
\end{definition}

\noindent To motivate why we are interested in the recovery of activation pathways, we note that in many applications we are not necessarily interested in achieving perfect reconstruction. Instead, we wish to ensure that a trained network is able to identify the key features of the observed data. Recovery of the activation pathway implies, by construction, that the decoder identifies the presence of the salient features of the data used in its creation.

\subsection{Uniform guarantees for activation pathway recovery} \label{ch3_subsec_unifrom guarantees}
\noindent In \cite{JMLR:v18:16-505}, and under the assumption that $D^{(l)}$ is the identity matrix for all $l \in [L]$,  Papyan, Romano, and Elad studied the $\{ \hat{\vx}^{(l)} \}_{l=1}^L$ obtained by the DCSC encoder in relation to the $\{\vx^{(l)}\}_{l=1}^L$ generated by the encoder. In particular, they proved various forms of recovery guarantees under sparsity constraints on the encoder representations. Their analysis relies heavily on the notion of the coherence of a dictionary, 
\begin{equation}\label{coherence}
\mu(\textbf{A}) \defeq \max_{i\ne j} | \langle\textbf{a}_i,\textbf{a}_j \rangle|,
\end{equation} 
where $\textbf{a}_i$ is the $i^{th}$ column of $\textbf{A}$. One of the main technical innovations in \cite{JMLR:v18:16-505} was the derivation of traditional sparse approximation bounds in the convolutional, multilayer setting. To this end they introduced the following novel local sparsity measures, based on the banded, circular structure of each $\mA^{(l)}$, and used them to ameliorate the limited lower bound on \eqref{coherence}. 
\begin{itemize}
	\item $\| \textbf{x} \|_{\alpha, \infty}^{P^{(l)}} \defeq \max_{i \in n_lM} \| P^{(l)}(i)\textbf{x} \|_\alpha$ where $P^{(l)}(i)$, the patch operator at the $l$th layer, is an $n_lM \times n_lM$ diagonal, binary matrix with exactly $m_{l+1}$ consecutive nonzeros starting at row $i$ with wraparound. To be clear, if $i+ m_l -1 \leq n_lM$ then $P_{j,j}^{(l)}(i) = 1$ iff $i\leq j\leq i+ m_l -1$, if $i+ m_l -1 >n_lM$ then $P_{j,j}^{(l)} (i)= 1$ iff $i\leq j\leq n_lM$ or $1\leq j \leq m_l - 1 - (n_l M - i) $  (see \cite{JMLR:v18:16-505} for further details). In this paper we will only consider $\alpha \in \{0,2\}$, hence $\| \cdot \|_{\alpha}$ refers to the euclidean norm when $\alpha = 2$, and a function counting the number of non-zeros in the argument vector when $\alpha = 0$.
	\item $\| \textbf{x} \|_{\alpha, \infty}^{Q^{(l)}} \defeq \max_{i \in n_lM} \| Q_i^{(l)}\textbf{x} \|_\alpha$ where $Q^{(l)}(i)$, the stripe operator, is an $n_lM \times n_lM$ diagonal, binary matrix with exactly $\floor{((2(m_l/s_{l})-1)n_l)}$ consecutive nonzeros starting at row $i$ with wraparound. To be clear, if $i+ \floor{((2(m_l/s_{l})-1)n_l)} -1 \leq n_lM$ then $Q_{j,j}^{(l)}(i) = 1$ iff $i\leq j\leq i+ \floor{((2(m_l/s_{l})-1)n_l)} -1$, if $i+ \floor{((2(m_l/s_{l})-1)n_l)} -1 >n_lM$ then $Q_{j,j}^{(l)} (i)= 1$ iff $i\leq j\leq n_lM$ or $1\leq j \leq \floor{((2(m_l/s_{l})-1)n_l)} - 1 - (n_l M - i) $ (again see \cite{JMLR:v18:16-505} for further details). A stripe of $\vx^{(l)}$ then is the sparse code associated with a particular patch of $\vx^{(l-1)}$. As before we will only consider $\alpha \in \{0,2\}$.
\end{itemize}

Papyan et al's analysis is wide-ranging, including sparsity conditions under which the representations generated by the decoder are unique. They also consider a variety of thresholding operators such as soft and hard thresholding as well as more advanced algorithms to compute $\hat{\vx}^{(l)}$ from $\textbf{A}^{(l)}$ and $\hat{\vx}^{(l-1)}$. We focus only on the derived uniform bound concerning the recovery of activation pathways. Assume that $\supp(\hat{\vx}^{(k)}) = \supp(\vx^{(k)})$ for all $k < l$, and that the cardinality of $\supp(\vx^{(l)})$ is known for all $l \in [L]$ by the decoder network.  Papyan et al proved that as long as 
\begin{equation}\label{mu_inv}
\| \vx^{(l)} \|_{0,
	\infty}^{Q^{(l)}}<\frac{1}{\mu^{(l)}|x^{(l)}_{max}|} \left(\frac{1}{2}|x^{(l)}_{min}|-\zeta_l\right) +\frac{1}{2},
\end{equation}
where $\zeta_l \geq \| \vv^{(l)} \|_{2, \infty}^{P^{(l)}}$ is an upper bound on the patch error at the $l$th layer, $\mu_{l} \defeq \mu(\textbf{A}^{(l)})$ and $|x^{(l)}_{min}|$ and $|x^{(l)}_{max}|$ are the smallest and largest non-zero entries of $\vx^{(l)}$ respectively, then $\supp(\hat{\vx}^{(l)}) = \supp(\vx^{(l)})$.

\section{Recovery of denser activation pathways} \label{ch3_sec_analysis}
Notable in the sparsity bound \eqref{mu_inv} is the presence of $\mu_{l}$, which allows for a nontrivial stripe sparsity.  Bounds of the form \eqref{mu_inv} are prevalent in the theory of sparse approximation, see for instance \cite[Chapter 5]{intro_CS}, where it is known \cite{articleWelch}
for a generic matrix $\mB\in \reals^{m\times \gamma m}$ that $\mu(\mB)> m^{-1/2}\sqrt{1-\gamma^{-1}}$. This is colloquially referred to as the square-root bottleneck in that $\mu^{-1}\sim m^{1/2}$. In many applications, e.g. imaging, typically $m_l$ is not more than $7^2$ and $n_l$ is approximately $2m_l$. In addition, guaranteeing the recovery of denser activations is also made challenging due to the fact that $\mA^{(l)}$ is a convolutional matrix. This structure can result in a large mutual coherence if the stride between shifted versions of the local dictionary $\mA_{Local}^{(l)}$ is small. As a result, the proportionality of $\mu_l$ to the signal complexity, measured in terms of the sparsity, limits the ability of this prior work to provide guarantees in many practical situations.

It is well known from the work of Schnass and Vandergheynst \cite{4351958} that, in the single layer context, if one introduces a randomised sign pattern then a Rademacher concentration inequality can be used to derive bounds demonstrating that the recovery of activations is typically possible even when the sparsity constraint is relaxed to depend on $\mu_l^{-2}$.

\begin{theorem}[Rademacher concentration \cite{Talagrand}]
	\label{theorem:rademacher_cm_inequality}
	Let $\alpha$ be an arbitrary real vector and
	$\varepsilon$ a random vector whose elements are
	independent Rademacher
	random variables. Then for all $t \in \mathbb{R}_{>0}$
	\begin{equation}
	P \left( |\sum_{i}  \varepsilon_i \alpha_i |>t \right) \leq 2\exp \left( -  \frac{t^2}{2\| \alpha \|_2^2} \right).
	\end{equation}
\end{theorem}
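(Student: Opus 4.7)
The plan is to establish this classical Rademacher concentration inequality by the exponential moment method (the Chernoff bound), handling the one-sided tail first and then obtaining the two-sided bound by symmetry, which produces the factor of $2$.

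First, I would apply Markov's inequality to the exponentiated sum: for any $\lambda>0$,
\[
P\!\left(\sum_i \varepsilon_i \alpha_i > t\right) = P\!\left(e^{\lambda \sum_i \varepsilon_i \alpha_i} > e^{\lambda t}\right) \leq e^{-\lambda t}\, \mathbb{E}\!\left[\exp\!\left(\lambda \sum_i \varepsilon_i \alpha_i\right)\right].
\]
Because the $\varepsilon_i$ are independent, the moment generating function factors as $\prod_i \mathbb{E}[\exp(\lambda \varepsilon_i \alpha_i)] = \prod_i \cosh(\lambda \alpha_i)$, using that each $\varepsilon_i$ takes the values $\pm 1$ with equal probability.

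The key scalar estimate is $\cosh(x) \leq \exp(x^2/2)$ for every $x \in \mathbb{R}$, which follows by comparing the Taylor series $\cosh(x) = \sum_{k\geq 0} \frac{x^{2k}}{(2k)!}$ and $\exp(x^2/2) = \sum_{k\geq 0} \frac{x^{2k}}{2^k k!}$ term by term via the elementary inequality $(2k)! \geq 2^k k!$. Substituting gives
\[
\mathbb{E}\!\left[\exp\!\left(\lambda \sum_i \varepsilon_i \alpha_i\right)\right] \leq \prod_i \exp\!\left(\frac{\lambda^2 \alpha_i^2}{2}\right) = \exp\!\left(\frac{\lambda^2 \|\alpha\|_2^2}{2}\right),
\]
so the Chernoff bound becomes $\exp\!\left(\lambda^2 \|\alpha\|_2^2/2 - \lambda t\right)$. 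Optimizing the exponent over $\lambda>0$ at $\lambda^\star = t/\|\alpha\|_2^2$ yields the one-sided bound $\exp(-t^2/(2\|\alpha\|_2^2))$.

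Finally, since $-\varepsilon$ has the same distribution as $\varepsilon$, the identical estimate applies to $P\!\left(\sum_i \varepsilon_i \alpha_i < -t\right)$, and a union bound over the two symmetric tails introduces the prefactor of $2$, completing the proof. The only non-routine ingredient is the scalar sub-Gaussian bound $\cosh(x)\leq e^{x^2/2}$; everything else is a mechanical application of Markov's inequality, independence, and optimization over a single real parameter.
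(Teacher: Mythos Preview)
Your proof is correct and is the standard exponential-moment (Chernoff) argument for the sub-Gaussian tail of a Rademacher sum. However, there is nothing to compare against: the paper does not prove this theorem at all but simply quotes it as a known result from \cite{Talagrand}, using it as a black-box tool inside the proof of Lemma~\ref{lemma_1}. So your write-up supplies a self-contained justification where the paper is content to cite the literature.
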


Our main contribution is to extend the techniques used in \cite{4351958} to the multi-layer setting of \cite{JMLR:v18:16-505}, which explains and motivates the introduction of the random diagonal matrix $D^{(l)}$ at each layer of the DCSC model. This matrix applies a random sign pattern to the columns of $\textbf{A}^{(l)}$ and although this matrix is primarily an artefact necessary for our analysis, it is interesting to note its connection with dropout. Dropout is a technique commonly used when training DCNNs in which a random set of nodes (or columns of the weight matrix) are ignored in every update of the weights. Indeed, one can tentatively interpret $D^{(l)}$ as a special form of dropout, which selects either the positive or negative signed column from a wider dictionary that contains both. Under this adaption, and recalling that $|x^{(l)}_{min}|$ and $|x^{(l)}_{max}|$ are the smallest and largest non-zeros in terms of absolute value of $\vx^{(l)}$, then we are able to provide Theorem \ref{ch3_thm_prob_bound}.

\begin{theorem}
	\label{ch3_thm_prob_bound}
	Under the DCSC model,  for each $l \in [L]$ let $S_l\in \naturals$ 
	be an upper bound on the stripe sparsity, $\| \vx^{(l)} \|_{0, \infty}^{Q^{(l)}}  \leq S_l$. Assume that the model noise $\vv^{(0)}$ is such that $\supp(\hat{\vx}^{(0)}) = \supp(\vx^{(0)})$. Furthermore, for each $l \in [L] \cup \{0\}$ let $\zeta_l \in \reals_{>0}$ be an upper bound on the patch error, $\| \vv^{(l)} \|_{2, \infty}^{P^{(l)}}  \leq \zeta_l $. Then the probability that the activation pathway of $\vx^{(L)}$ is recovered is at least
	\[
	1 -  2M \sum_{l=1}^L n_l \exp \left( -  \frac{|x^{(l)}_{min}|^2}{8  \left(|x^{(l)}_{max}|^2\mu_l^2S_l +\zeta_{l-1}^2 \right) } \right).
	\]
	In addition, assuming that $\supp(\hat{\vx}^{(l)}) = \supp(\vx^{(l)})$ and defining $\zeta_0 \defeq || \vv^{(0)}||_{2, \infty}^{P^{(l)}}$, then
	\[
	\zeta_{l} = \sqrt{\|  \hat{\textbf{x}}^{(l)}  \|_{0, \infty}^{P^{(l)}} } \left( \mu_l( S_l-1)|x_{max}^{(l)} | + \zeta_{l-1} \right)
	\] 
	is a valid upper bound on the error $|| \vv^{(l)}||_{2, \infty}^{P^{(l)}}\leq \zeta_l$ for each layer $l \in [L]$.
\end{theorem}

\noindent A key implication of Theorem \ref{ch3_thm_prob_bound} is that the bound on the density of nonzeros scales proportional to $ \mu_l^{-2}$ across a given layer rather than $\mu_l^{-1}$. Assume that $\supp(\hat{\vx}^{(k)}) = \supp(\vx^{(k)})$ for all $k < l$, and that the cardinality of $\supp(\vx^{(l)})$ is known for all $l \in [L]$ by the decoder network. With $\delta \in \left(2n_lM \exp \left( -  \frac{|x^{(l)}_{min}|^2}{8  \left(|x^{(l)}_{max}|^2\mu_l^2S_l +\zeta_{l-1}^2 \right) } \right),1 \right]$ (we refer the reader to Lemma \ref{ch3_lemma_single_layer} for details) then as long as
\begin{equation}
S_l \leq  \left( \frac{|x_{min}^{(l)}|^2}{8|x_{max}^{(l)}|^2\ln\left(\frac{2Mn_l}{\delta}\right)}- \frac{\zeta_{l-1}^2}{|x_{max}^{(l)}|^2} \right)\mu_{l}^{-2} 
\end{equation}
then the probability that the activation pattern at the $lth$ layer is recovered is at least $1 - \delta$.

\noindent We develop a proof of Theorem \ref{ch3_thm_prob_bound} using induction, analysing the probability that the forward pass fails to recover the activation pathway at an arbitrary layer $l\in [L]$ conditioned on recovery up to layer $l-1$. To this end we provide Lemma \ref{ch3_lemma_single_layer}, which extends bounds provided in \cite{4351958} to also include additive noise and the notion of local stripe sparsity.

\begin{lemma} \label{ch3_lemma_single_layer}
	Under the DCSC model, for each $l \in [L]$ let $S_l\in \naturals$ be an upper bound on the stripe sparsity $\| \vx^{(l)} \|_{0, \infty}^{Q^{(l)}}  \leq S_l$ of the encoder representation at the $l$th layer. Suppose for some $l\in [L]$ that  $\supp(\hat{\vx}^{(l-1)}) = \supp(\vx^{(l-1)})$ and that $ \zeta_{l-1}\geq \| \vv^{(l-1)} \|_{2, \infty}^{P^{(l-1)}} $. Then the probability that $\supp(\hat{\vx}^{(l)}) \neq \supp(\vx^{(l)})$ is at most
	\[
	2n_lM \exp \left( -  \frac{|x^{(l)}_{min}|^2}{8  \left(|x^{(l)}_{max}|^2\mu_l^2S_l +\zeta_{l-1}^2 \right) } \right).
	\]
	If $\supp(\hat{\vx}^{(l)}) = \supp(\vx^{(l)})$ then a valid upper bound for the patch error $|| \vv^{(l)}||_{2,\infty}^{P^{(l)}} $ is
	\[
	\zeta_{l} = \sqrt{\|  \hat{\textbf{x}}^{(l)}  \|_{0, \infty}^{P^{(l)}} } \left( \mu_l( S_l-1)|x_{max}^{(l)} | + \zeta_{l-1} \right).
	\] 
\end{lemma}

\begin{proof}	
	First, and for typographical ease, we drop the $l$ superscript on both $\mA^{(l)}$ and $D^{(l)}$. We further denote the $j$th diagonal element of $D$ as $\varepsilon_j$, and recall that these are mutually independent random variables with value either $-1$ or $1$ both with probability $0.5$. Furthermore, and again for notational ease, we define $\Lambda \defeq \supp(\vx^{(l)})$. A superscript bar will be used to denote the compliment of a set, for example $\bar{\Lambda} = [n_l M] \backslash \Lambda$. The event that the DCSC decoder recovers the support of the encoder representation, i.e., $\supp(\hat{\vx}^{(l)}) =\supp(\vx^{(l)})$,  will be denoted $W^{(l)}$, and, in keeping with the other notational aspects just mentioned, $\bar{W}^{(l)}$ will denote the event that $\supp(\hat{\vx}^{(l)}) \neq \supp(\vx^{(l)})$. Finally, due to the presence of various superscripts, we will use $\langle \cdot, \cdot\rangle: \reals^{n_l M} \times \reals^{n_l M} \rightarrow \reals$ to refer to the Euclidean inner product or dot product on $\reals^{n_l M}$.
	
	Considering Equation \eqref{eq:ch3_projection}, then for the DCSC decoder to fail to recover the support of the encoder representation, there must exist a nonzero entry in the decoder representation which is not in the support of the encoder representation and whose magnitude is larger than at least one of the nonzeros in the encoder representation. This means that there exists some $i \in \Lambda$ and some $k \in \bar{\Lambda}$ such that 
	\[
	| \langle \varepsilon_i \textbf{a}_i, \hat{\textbf{x}}^{(l-1)} \rangle | < | \langle \varepsilon_k \textbf{a}_k,  \hat{\textbf{x}}^{(l-1)} \rangle| .
	\]
	This condition is equivalent to requiring
	\[
	\min_{i \in \Lambda} | \langle \varepsilon_i \textbf{a}_i,  \hat{\textbf{x}}^{(l-1)}\rangle |< \max_{k \in \bar{\Lambda}}| \langle \varepsilon_k \textbf{a}_k,  \hat{\textbf{x}}^{(l-1)}\rangle |
	\]
	and therefore
	\[
	\prob(\bar{W}^{(l)}) = \prob(\min_{i \in \Lambda} | \langle \varepsilon_i \textbf{a}_i,  \hat{\textbf{x}}^{(l-1)} \rangle | < \max_{k \in \bar{\Lambda}} | \langle \varepsilon_k \textbf{a}_k,  \hat{\textbf{x}}^{(l-1)} \rangle | ).
	\]
	For an arbitrary $p \in \reals$, if $\min_{i \in \Lambda} | \langle \varepsilon_i \textbf{a}_i,  \hat{\textbf{x}}^{(l-1)}\rangle |< \max_{k \in \bar{\Lambda}}| \langle \varepsilon_k \textbf{a}_k,  \hat{\textbf{x}}^{(l-1)} \rangle | $ holds true then the event $\{\min_{i \in \Lambda} | \langle \varepsilon_i \textbf{a}_i,  \hat{\textbf{x}}^{(l-1)} \rangle | < p \} \cup \{ \max_{k \in \bar{\Lambda}} | \langle \varepsilon_k \textbf{a}_k,  \hat{\textbf{x}}^{(l-1)} \rangle | > p \}$ is also true. Applying the union bound it therefore follows that
	\begin{equation}
	\begin{split}
	\prob(\bar{W}^{(l)}) \leq&  \prob(\min_{i \in \Lambda} | \langle \varepsilon_i \textbf{a}_i,  \hat{\textbf{x}}^{(l-1)} \rangle | < p) + \prob(\max_{k \in \bar{\Lambda}} | \langle \varepsilon_k \textbf{a}_k,  \hat{\textbf{x}}^{(l-1)} \rangle | > p ).
	\end{split}
	\end{equation} \label{eq:ch3_split}
	We now provide bounds on each of the terms on the right hand side of the above inequality using the
	Rademacher concentration inequality stated in Theorem \ref{theorem:rademacher_cm_inequality}. Considering first the second term,
	\begin{equation}
	\begin{aligned}
	P\left( \max_{k \in \bar{\Lambda}} | \langle  \varepsilon_k  \textbf{a}_k, \hat{\textbf{x}}^{(l-1)} \rangle |> p \right)
	& \leq  \sum_{k\in \bar{\Lambda}} P\left(| \langle  \varepsilon_k \textbf{a}_k, \hat{\textbf{x}}^{(l-1)} \rangle | > p \right) \label{W} \nonumber\\
	& = \sum_{k\in \bar{\Lambda}} P \left( |\sum_{j \in \Lambda}
	\varepsilon_j' x_j ^{(l)}\langle  \textbf{a}_k, \textbf{a}_j
	\rangle + \varepsilon_k \langle   \textbf{a}_k, \vv^{(l-1)}
	\rangle | >p  \right)\\
	& \leq  2\sum_{k\in \bar{\Lambda}} \exp \left( \frac{-p^2}{2
		\left( \sum_{j \in \Lambda \cap \Gamma} |x^{(l)}_j|^2 |
		\langle  \textbf{a}_k, \textbf{a}_j \rangle|^2
		+\zeta_{l-1}^2 \right) } \right) \nonumber \\
	& \leq 2 (n_lM-|\Lambda|) \exp \left( \frac{-p^2}{2 \left(
		|x^{(l)}_{max}|^2 S_l\mu_l^2+\zeta_{l-1}^2 \right) }
	\right) \nonumber.
	\end{aligned}
	\end{equation}
	
	\noindent The first line and inequality arises from $\max_{k \in \bar{\Lambda}} \{ | \langle  \varepsilon_k  \textbf{a}_k, \hat{\textbf{x}}^{(l-1)} \rangle | > p\}$ implying that\\
	$\cup_{k \in \bar{\Lambda}} \{| \langle \varepsilon_k \textbf{a}_k, \hat{\textbf{x}}^{(l-1)} \rangle |\} > p$ and then applying the union bound. The second line is an expansion of the inner product using $\hat{\vx}^{l-1} = \vx^{l-1} + \vv^{(l-1)} =   \mA^{(l)}D^{(l)}\vx^{l} + \vv^{(l-1)}$. Here we also introduce a new Rademacher random variable $\varepsilon_j' \defeq \varepsilon_j\varepsilon_k$ and note that the set of random variables $\left( \bigcup_ {j \in n_lM} \{\varepsilon_j'\}\right) \cup \{\varepsilon_k\}$ are mutually independent. Moving from the second to the third line, we use Theorem \ref{theorem:rademacher_cm_inequality} and introduce the set $\Gamma$ to denote the indices of columns of $\textbf{A}^{(l)}$ which have a nonzero inner product with the column $\textbf{a}_k$. Furthermore, as $\va_k$ has unit $\ell_2$ norm and $|\supp(\va_k)| = m_l$, then $|\langle \va_i, \vv^{(l-1)} \rangle|  \leq \zeta_{l-1}^2$ by construction. The final line then follows from the fact that $|\langle a_k, a_j \rangle|\leq \mu_l^2$ for any $j \neq k$ and $|\Lambda \cap \Gamma| \leq S_l$, which in turn is a consequence of the assumption that $\| \textbf{x}^{(l)} \|_{0,\infty}^{Q^{(l)}}\le S_l$.
	
	Turning our attention to bounding the probability of $\min_{i \in \Lambda} | \langle \varepsilon_i \textbf{a}_i,  \hat{\textbf{x}}^{(l-1)} \rangle | < p$, we first expand the inner product as before and then use the triangle inequality to conclude that
	\[
	| \langle \varepsilon_i \textbf{a}_i,  \hat{\textbf{x}}^{(l-1)} \rangle |
	\geq | x_i| - \left| \sum_{j \in \Lambda, j \neq i} \varepsilon_j' x_j^{(l)}  \langle  \textbf{a}_i, \textbf{a}_j \rangle + \varepsilon_i \langle  \textbf{a}_i, \textbf{v}^{(l-1)} \rangle \right|.
	\]
	We are then able to bound the probability that $\min_{i \in \Lambda} | \langle \varepsilon_i \textbf{a}_i,  \hat{\textbf{x}}^{(l-1)} \rangle | < p$ using the same steps as before for $ \max_{k \in \bar{\Lambda}} \{| \langle  \varepsilon_k  \textbf{a}_k, \hat{\textbf{x}}^{(l-1)} \rangle |\} > p$.
	\[
	\begin{split}
	\prob(\min_{i \in \Lambda} | \langle \varepsilon_i \textbf{a}_i,  \hat{\textbf{x}}^{(l-1)} \rangle | < p) & \leq P \left( \max_{i \in \Lambda} | \sum_{j \in \Lambda, j \neq i}\varepsilon_j' x_j^{(l)}  \langle  \textbf{a}_i, \textbf{a}_j \rangle + \varepsilon_i \langle  \textbf{a}_i, \textbf{v}^{(l-1)} \rangle| > |x^{(l)}_{min}| - p \right) \\
	&\leq \sum_{i \in \Lambda} P\left( |\sum_{j \in \Lambda, j \neq i}\varepsilon_j' x_j^{(l)}  \langle  \textbf{a}_i, \textbf{a}_j \rangle + \varepsilon_i \langle  \textbf{a}_i, \textbf{v}^{(l-1)} \rangle| > |x^{(l)}_{min}| - p \right) \\
	& \leq 2 \sum_{i \in \Lambda}  \exp \left( -  \frac{(|x_{min}^{(l)}| - p)^2}{2  \left( \sum_{j \in \Lambda \cap \Gamma / i} |x_j|^2 | \langle  \textbf{a}_k, \textbf{a}_j \rangle|^2 + \zeta_{l-1}^2\right) } \right) \\
	& \leq 2|\Lambda| \exp \left( \frac{-(|x_{min}^{(l)}| - p)^2}{2 \left( |x^{(l)}_{max}|^2 S_l\mu_l^2+\zeta_{l-1}^2 \right) } \right).
	\end{split}
	\]
	The first line is a result of rearranging and bounding the expanded inner product, the subsequent lines then follow in the same manner as for $ \max_{k \in \bar{\Lambda}} \{| \langle  \varepsilon_k  \textbf{a}_k, \hat{\textbf{x}}^{(l-1)} \rangle |\} > p$.  Recalling that $p \in \reals_{\geq 0}$ is arbitrary, then to recover the bound claimed we let $p  = |x_{min}| /2 $. Indeed, for this value of $p$ it follows that
	\[
	\begin{aligned}
	\prob(\bar{W}^{(l)}) &\leq   2 (n_l M-|\Lambda|) \exp \left( \frac{-p^2}{2 \left(
		|x^{(l)}_{max}|^2 S_l\mu_l^2+\zeta_{l-1}^2 \right) }
	\right)  + 2|\Lambda| \exp \left( \frac{-(|x_{min}^{(l)}| - p)^2}{2 \left( |x^{(l)}_{max}|^2 S_l\mu_l^2+\zeta_{l-1}^2 \right) } \right) \\
	& = 2 n_l M \exp \left( -\frac{|x_{min}^{(l)}|^2}{8 \left(
		|x^{(l)}_{max}|^2 S_l\mu_l^2+\zeta_{l-1}^2 \right) }.
	\right) 
	\end{aligned}
	\]
	In order to bound the patch error $\vv^{(l)}$ under the assumption that the $\supp(\vx)$ is recovered, we adopt the approach of Theorem 8 of \cite{JMLR:v18:16-505}. First
	\[
	\begin{split}
	\| \hat{\textbf{x}}^{(l)} - \textbf{x}^{(l)} \|_{2, \infty} ^{P^{(l)}}& = \max_i \| P^{(l)}(i)\left(\textbf{x}^{(l)} - \hat{\textbf{x}}^{(l)} \right)\|_2\\
	& = \sqrt{ \|\textbf{x}^{(l)}\|_{0, \infty}^{P^{(l)}}} \left( \max_i \| P^{(l)}(i)\left(\textbf{x}^{(l)} - \hat{\textbf{x}}^{(l)} \right) \|_{\infty}\right)\\
	& \leq \sqrt{ \|\textbf{x}^{(l)}\|_{0, \infty}^{P^{(l)}}} \left( \| \textbf{x}^{(l)} - \hat{\textbf{x}}^{(l)} \|_{\infty}\right).
	\end{split}
	\]
	The first equality follows from the definition of the patch norm $\| \cdot \|_{2, \infty} ^{P^{(l)}}$. The second inequality arises from the fact that for any $\textbf{z} \in \reals^{n_lM}$, with $k$ nonzeros, then  $\| \textbf{z}\|_2 \leq \sqrt{k}\| \textbf{z}\|_{\infty}$. Given that we are assuming that $\supp(\hat{\vx}^{(l)}) =\supp(\vx^{(l)})$, then the inequality on the third line follows from the fact that the largest element in a vector is at least as large as the largest element of any subset of elements of that vector. In what follows subscript notation is used to indicate the subset of entries of a vector or columns of a matrix in an index set. As $\| \textbf{x}^{(l)} - \hat{\textbf{x}}^{(l)} \|_{\infty} = \| \textbf{x}^{(l)}_{\Lambda}  - \hat{\textbf{x}}^{(l)}_{\Lambda}  \|_{\infty}$ , and recalling that the $\ell_{\infty}$ matrix norm is the maximum row sum of the absolute elements of the matrix, then
	\[
	\begin{split}
	\| \textbf{x}_{\Lambda}^{(l)} - \hat{\textbf{x}}^{(l)}_{\Lambda} \|_{\infty} & = \| (\textbf{A}_{\Lambda}D_{\Lambda})^+(\textbf{A}_{\Lambda}D_{\Lambda})\textbf{x}^{(l)}_{\Lambda} - (\textbf{A}_{\Lambda} D_{\Lambda} )^T\hat{\vx}^{(l-1)} \|_{\infty}\\
	& =  \| (\textbf{I} -(\textbf{A}_{\Lambda} D_{\Lambda} )^T(\textbf{A}_{\Lambda} D_{\Lambda} ))\textbf{x}^{(l)}_{\Lambda} - (\textbf{A}_{\Lambda} D_{\Lambda} )^T\textbf{v}^{(l-1)} \|_{\infty}\\
	& \leq \| (\textbf{I} -\textbf{A}_{\Lambda}^T \textbf{A}_{\Lambda}) \|_{\infty} \| \textbf{x}^{(l)}_{\Lambda} \|_{\infty} + \|\textbf{A}_{\Lambda}^T \textbf{v}^{(l-1)}\|_{\infty}\\
	& \leq \mu_l(\|\textbf{x}^{(l)}\|^{Q^{(l)}}_{0, \infty}-1)|x_{max}^{(l)}| + \zeta_{l-1}.
	\end{split}
	\]	
	\noindent On line one, $(\textbf{A}_{\Lambda}D_{\Lambda})^+$ denotes the Moore-Penrose inverse, the equality then follows from the fact that $\textbf{I} =  (\textbf{A}_{\Lambda}D_{\Lambda})^+(\textbf{A}_{\Lambda}D_{\Lambda})$ and from the definition of the sparse projection carried out by the forward pass of the decoder at each layer, Equation \eqref{eq:ch3_projection}. The equality on line 2 is obtained by introducing a positive and negative $\textbf{A}_{\Lambda} D_{\Lambda}\textbf{x}^{(l)}_{\Lambda}$. The inequality on the third line is obtained by applying the triangle inequality and then using the submultiplicative property of the induced matrix norm. The fourth and final inequality follows as a result of the definition of the $\ell_{\infty}$ matrix norm. The diagonal elements of $\textbf{I} -\textbf{A}_{\Lambda}^T \textbf{A}_{\Lambda}$ are all zero as $|\langle \textbf{a}_i, \textbf{a}_i \rangle | = 1 $. Off of the diagonal, at most $\|\textbf{x}^{(l)}\|^{Q^{(l)}}_{0, \infty} -1$ entries are nonzero due to the convolutional structure of $\mA$. In addition, as $|\langle \textbf{a}_i, \textbf{a}_j \rangle| \leq \mu_l$ it follows that
	\[
	\| (\textbf{I} -\textbf{A}_{\Lambda}^T \textbf{A}_{\Lambda}) \|_{\infty} \leq \mu_l (\|\textbf{x}^{(l)}\|^{Q^{(l)}}_{0, \infty} -1) \leq \mu_l S_l.
	\]
	\noindent Finally, letting $\alpha \defeq \min_{l} \{l \in [n_{l-1}M]: \; l \in \supp(\va_i)\}$ and recalling that $\|\va_i \|_2 \leq 1$, applying the Cauchy-Schwarz inequality it follows that
	\[
	\|\textbf{A}_{\Lambda}^T \textbf{v}^{(l-1)}\|_{\infty} = \max_i | \langle \textbf{a}_i, \textbf{v}^{(l-1)} \rangle| = \max_i | \langle \textbf{a}_i, P^{(l-1)}(\alpha) \textbf{v}^{(l-1)} \rangle| \leq \| P^{(l-1)}(\alpha) \vv^{l-1} \|_{2} \leq \zeta_{l-1}.
	\]
	This concludes the proof of Lemma \ref{ch3_lemma_single_layer}.
\end{proof}

\noindent  We now proceed to prove Theorem \ref{ch3_thm_prob_bound}.

\begin{proof}
	With Lemma \ref{ch3_lemma_single_layer} in place then Theorem \ref{ch3_thm_prob_bound} can be proved by induction. For the sake of convenience we let
	\[
	\gamma_l \defeq 2Mn_l\exp \left( -  \frac{|x^{(l)}_{min}|^2}{8  \left(|x^{(l)}_{max}|^2\mu_l^2S_l +\zeta_{l-1}^2 \right) } \right).
	\]
	Furthermore, and in keeping with our notation, let $Y^{(l)}$ and $\bar{Y}^{(l)}$ denote the events that the activation pathway of $\vx^{(L)}$ is recovered and not recovered up to the $l$th layer respectively, and $W^{(l)}$ and $\bar{W}^{(l)}$ be the events that the support at the $l$th layer is correctly and not correctly recovered respectively. 
	
	The base case $l=1$ follows by the construction of $\vv^{(0)}$ and by direct application of Lemma \ref{ch3_lemma_single_layer}. As a result $\prob(\bar{Y}^{(1)}) = \prob(\bar{W}^{(1)}) \leq \gamma_1$. While the bound in Lemma \ref{ch3_lemma_single_layer} was derived by conditioning on recovery at the previous layer, note that the bound still in fact applies if we condition on recovery across all preceding layers. Indeed, we could make such assumptions and still derive the same bound by using only the information concerning the layer immediately before the layer of interest. As a result
	\[
	\prob(\bar{W}^{(l)}|\cap_{k=1}^{l-1}W^{(k)}) = \prob(\bar{W}^{(l)}| Y^{(l-1)}) \leq \gamma_l
	\]
	\noindent for all $l \in [L]$. Assume now that the desired result holds true for the $l$th layer, meaning $\prob(\bar{Y}^{(l)}) \leq \sum_{k=1}^{l} \gamma_k$. Considering $\bar{Y}_{l+1}$, then
	\[
	\begin{aligned}
	\prob(\bar{Y}_{l+1}) & = \prob(\bigcup_{k=1}^{l+1} \bar{W}^{(k)})\\
	& = \prob(\bar{W}^{(l+1)} \cup \bar{Y^{(l)}})\\
	& = \prob(\bar{Y_l}) + \prob( \bar{W}^{(l+1)} \cap Y^{(l)})\\
	& = \prob(\bar{Y^{(l)}}) + \prob( \bar{W}^{(l+1)} | Y^{(l)}) \prob(Y^{(l)})\\
	& \leq \sum_{k=1}^{l} \gamma_k+ \gamma_{l+1} \prob(Y_l)\\
	& \leq \sum_{k=1}^{l} \gamma_k+ \gamma_{l+1} \\
	&= \sum_{k=1}^{l+1} \gamma_k.
	\end{aligned}
	\]
	\noindent This proves the result holds for the $l+1$th case, given that this and the base case hold true then all other cases must follow. Finally, the bound on the patch error at each layer follows immediately from Lemma \ref{ch3_lemma_single_layer}.
\end{proof}

\section{Concluding remarks} \label{ch3_sec_concluding_remarks}
Modelling the forward pass algorithm as a sparse coding problem allows us to derive recovery guarantees, which ensure that the representations computed by the forward pass are meaningful and interpretable. Our contributions in this paper are a) an approach to carrying out a probabilistic rather than worst case analysis for the recovery of activation pathways using the DCSC model, given in Definition \ref{ch3_def_DCSC_data_model}, and b) Theorem \ref{ch3_thm_prob_bound}, which extends the prior uniform bound in \cite{JMLR:v18:16-505} to one which holds with high probability. The key benefit of this result is that the proportionality of the stripe sparsity bound in regard to the dictionary coherence improves from $\mu_l^{-1}$ to $\mu_l^{-2}$ at each layer. Assuming the weight matrices are suitably conditioned, then this indicates that the forward pass algorithm is likely to recover the latent representations generated by the encoder of the DCSC for a more complex, measured in terms of the number of nonzeros per stripe, family of signals than prior work suggests. From a practical perspective, if sparse coding is an important factor explaining the efficacy of the forward pass of DCNNs, then explicitly encouraging weight matrices with low coherence during training could improve their performance.

\section*{Acknowledgements}
This work is supported by the Alan Turing Institute under the EPSRC grant EP/N510129/1 and the Ana Leaf Foundation. We would like to thank David L. Donoho, Vardan Papyan for motivating this work, Hemant Tyagi for his helpful feedback and Jeremias Sulam for stimulating discussions.

\clearpage
\bibliographystyle{unsrt}  
\bibliography{refs}
\clearpage
\appendix
\end{document}